\pdfoutput=1
\documentclass{article}

\usepackage{graphicx,subfigure,natbib}
\usepackage{hyperref}

\usepackage[accepted]{icml2011} 

\usepackage{amsmath}
\usepackage{amssymb}

\usepackage{algorithm}
\usepackage{algorithmic}
\algsetup{indent=2em}

\usepackage{url}

\newtheorem{definition}{Definition}
\newtheorem{lemma}{Lemma}
\newtheorem{corollary}{Corollary}
\newtheorem{theorem}{Theorem}

\newcommand{\e}{{\mathbf e}}

\newcommand{\tr}{{\mathrm{tr}}}

\newcommand{\BlackBox}{\rule{1.5ex}{1.5ex}}  
\newenvironment{proof}{\par\noindent{\bf Proof\ }}{\hfill\BlackBox\\[2mm]}

\newcommand{\reals}{\mathbb{R}}

\newcommand{\V}{\mathcal{V}}
\newcommand{\U}{\mathcal{U}}
\newcommand{\inner}[1]{\langle #1 \rangle}

\newcommand{\rank}{\mathrm{rank}}
\newcommand{\supp}{\mathrm{supp}}
\newcommand{\spn}{\mathrm{span}}

\DeclareMathOperator*{\argmin}{argmin} 

\renewcommand{\eqref}[1]{Equation~(\ref{#1})}
\newcommand{\figref}[1]{Figure~\ref{#1}}
\newcommand{\secref}[1]{Section~\ref{#1}}
\newcommand{\thmref}[1]{Theorem~\ref{#1}}
\newcommand{\lemref}[1]{Lemma~\ref{#1}}

\newcommand{\blambda}{\bar{\lambda}}
\newcommand{\bA}{\bar{A}}
\newcommand{\bI}{\bar{I}}

\newcommand{\vect}{\mathrm{vec}}

\newcommand{\ShortPaper}[2]{#1}

\icmltitlerunning{Large-Scale Convex Minimization with a Low-Rank Constraint}

\begin{document}

\twocolumn[
\icmltitle{Large-Scale Convex Minimization with a Low-Rank Constraint}

\icmlauthor{Shai Shalev-Shwartz}{shais@cs.huji.ac.il}
\icmlauthor{Alon Gonen}{alongnn@gmail.com}
\icmladdress{School of Computer Science and Engineering, The Hebrew University of Jerusalem, ISRAEL}
\icmlauthor{Ohad Shamir}{ohadsh@microsoft.com}
\icmladdress{Microsoft Research New-England, USA}

\icmlkeywords{Rank Constraint, Trace-norm, Matrix Completion}

\vskip 0.3in
]

\begin{abstract}
We address the problem of minimizing a convex function over the space of large matrices with low rank. While this optimization problem is hard in general, we propose an efficient greedy algorithm and derive its formal approximation guarantees. Each iteration of the algorithm involves (approximately) finding the left and right singular vectors corresponding to the largest singular value of a certain matrix, which can be calculated in linear time. This leads to an algorithm which can scale to large matrices arising in several applications such as matrix completion for collaborative filtering and robust low rank matrix approximation.
\end{abstract}

\section{Introduction}

Our goal is to approximately solve an optimization problem of the form:
\begin{equation} \label{eqn:rankMin}
\min_{A : \rank(A) \le r} R(A) ~,
\end{equation}
where $R: \reals^{m \times n} \to \reals$ is a convex and smooth function. This problem arises in many machine learning applications such as collaborating filtering \cite{korenBeVo09},  robust low rank matrix approximation \cite{KeKan05,CrFil98,BacBeFal96}, and multiclass classification \cite{AmitFiSrUl07}.
The rank constraint on $A$ is non-convex and therefore it is generally NP-hard to solve \eqref{eqn:rankMin} (this follows from \cite{Natarajan95,DavisMaAv97}).

In this paper we describe and analyze an approximation algorithm for solving \eqref{eqn:rankMin}.
Roughly speaking, the proposed algorithm is based on a simple, yet powerful, observation: instead of representing a matrix $A$ using $m \times n$ numbers, we represent it using an infinite dimensional vector $\lambda$, indexed by all pairs $(u,v)$ taken from the unit spheres of $\reals^m$ and $\reals^n$ respectively. In this representation, low rank corresponds to sparsity of the vector $\lambda$.

Thus, we can reduce the problem given in \eqref{eqn:rankMin} to the problem of minimizing a vector function $f(\lambda)$ over the set of sparse vectors, $\|\lambda\|_0 \le r$. Based on this reduction, we apply a greedy approximation algorithm for minimizing a convex vector function subject to a sparsity constraint. At first glance, a direct application of this reduction seems impossible, since $\lambda$ is an infinite-dimensional vector, and at each iteration of the greedy algorithm one needs to search over the infinite set of the coordinates of $\lambda$. However, we show that this search problem can be cast as the problem of finding the first leading right and left singular vectors of a certain matrix.

After describing and analyzing the general algorithm, we show how to apply it  to the problems of matrix completion and robust low-rank matrix approximation. As a side benefit, our general analysis yields a new sample complexity bound for matrix completion. We demonstrate the efficacy of our algorithm by conducting experiments on large-scale movie recommendation data sets.

\subsection{Related work} \label{sec:related}

As mentioned earlier, the problem defined in \eqref{eqn:rankMin} has many applications, and therefore it was studied in various contexts. A popular approach is to use the trace norm as a surrogate for the rank (e.g. \cite{FazelHiBo02}). This approach is closely related to the idea of using the $\ell_1$ norm as a surrogate for sparsity, because low rank corresponds to sparsity of the vector of singular values and the trace norm is the $\ell_1$ norm of the vector of singular values. This approach has been extensively studied, mainly in the context  of collaborating filtering. See for example \cite{CaiCaSh08,CandesPl10,CandesRe09,KeshavanMoOh10,KeshavanOh09}.

While the trace norm encourages low rank solutions, it does not always produce sparse solutions.
Generalizing recent studies in compressed sensing, several papers (e.g. \cite{RechtFaPa07,CaiCaSh08,CandesPl10,CandesRe09,Recht09}) give recovery guarantees for the trace norm approach. However, these guarantees rely on rather strong assumptions (e.g., it is assumed that the data is indeed generated by a low rank matrix, that certain incoherence assumptions hold, and for matrix completion problems, it requires the entries to be sampled uniformly at random). In addition, trace norm minimization often involves semi-definite programming, which usually does not scale well to large-scale problems.

In this paper we tackle the rank minimization directly, using a greedy selection approach, without relying on the trace norm as a convex surrogate. Our approach is similar to forward greedy selection approaches for optimization with sparsity constraint (e.g. the MP \cite{MallatZa93} and OMP \cite{PatiReKr02} algorithms), and in particular we extend the fully corrective forward greedy selection algorithm given in \cite{ShalevSrZh10}). We also provide formal guarantees on the competitiveness of our algorithm relative to matrices with small trace norm.

Recently, \cite{LeeBr10} proposed the ADMiRA algorithm, which also follows the greedy approach.  However, the ADMiRA algorithm is different, as in each step it first chooses  $2r$ components and then uses SVD to revert back to a $r$ rank matrix. This is more expensive then our algorithm which chooses a single rank 1 matrix at each step. The difference between the two algorithms is somewhat similar to the difference between the OMP \cite{PatiReKr02} algorithm for learning sparse vectors, to CoSaMP \cite{NeedellTr09} and SP \cite{DaiMi08}. In addition, the ADMiRA algorithm is specific to the squared loss while our algorithm can handle any smooth loss. Finally, while ADMiRA comes with elegant performance guarantees, these rely on strong assumptions, e.g. that the matrix defining the quadratic loss satisfies a rank-restricted isometry property. In contrast, our analysis only assumes smoothness of the loss function.

The algorithm we propose is also related to Hazan's algorithm \cite{Hazan08} for solving PSD problems, which in turns relies on  Frank-Wolfe algorithm \cite{FrankWo56} (see  Clarkson \cite{Clarkson08}), as well as to the follow-up paper of \cite{JaggiSu10}, which applies Hazan's algorithm for optimizing with trace-norm constraints. There are several important changes though. First, we tackle the problem directly and do not enforce neither PSDness of the matrix nor a bounded trace-norm. Second, our algorithm is "fully corrective", that is, it extracts all the information from existing components before adding a new component. These differences between the approaches are analogous to the difference between Frank-Wolfe algorithm and fully corrective greedy selection, for minimizing over sparse vectors, as discussed in \cite{ShalevSrZh10}. Finally, while each iteration of both methods involves approximately finding leading eigenvectors, in \cite{Hazan08} the quality of approximation should improve as the algorithm progresses while our algorithm can always rely on the same constant approximation factor.

\section{The GECO algorithm}

In this section we describe our algorithm, which we call Greedy Efficient Component Optimization (or GECO for short).
Let $A \in \reals^{m \times n}$ be a matrix, and without loss of generality assume that $m \le n$. The SVD theorem states that $A$ can be written as
$A = \sum_{i=1}^m \lambda_i u_i v_i^T$,
where $u_1,\ldots,u_m$ are members of $\U = \{u  \in \reals^m : \|u\|=1\}$, $v_1,\ldots,v_m$ comes from $\V = \{v  \in \reals^n : \|v\|=1\}$, and $\lambda_1,\ldots,\lambda_m$ are scalars. To simplify the presentation, we assume that each real number is represented using a finite number of bits, therefore the sets $\U$ and $\V$ are finite sets.\footnote{This assumption greatly simplifies the presentation but is not very limiting since we do not impose any restriction on the amount of bits needed to represent a single real number. We note that the assumption is not necessary and can be waived by writing $A = \int_{(u,v)\in \U \times \V} u v^T ~d\lambda(u,v)$, where $\lambda$ is a measure on $\U \times \V$, and from the SVD theorem, there is always a representation with $\lambda$ which is non-zero on finitely many points.} It follows that we can also write $A$ as
$
A = \sum_{(u,v) \in \U \times \V} \lambda_{u,v} u v^T ~,
$
where $\lambda \in \reals^{|\U \times \V|}$ and we index the elements of $\lambda$ using pairs $(u,v) \in \U \times \V$. Note that the representation of $A$ using a vector $\lambda$ is not unique, but from the SVD theorem, there is always a representation of $A$ for which the number of non-zero elements of $\lambda$ is at most $m$, i.e. $\|\lambda\|_0 \le m$ where $\|\lambda\|_0 = |\{ (u,v) : \lambda_{u,v} \neq 0\}|$. Furthermore, if $\rank(A) \le r$ then there is a representation of $A$ using a vector $\lambda$ for which $\|\lambda\|_0 \le r$.

\begin{algorithm}[t]
\caption{GECO} \label{algo:GECO}
\begin{algorithmic}[1]
\STATE {\bf Input:} Convex-smooth function $R : \reals^{m \times n} \to \reals$   ~;~\\   rank constraint $r$ ~;~ tolerance $\tau \in [0,1/2]$
\STATE {\bf Initialize:} $U = []$, $V = []$
\FOR {i=1,\ldots,r}
\STATE $(u,v) = \mathrm{ApproxSV}(\nabla R(U V^T),\tau)$
\STATE Set $U = [U \,,\, u]$ and $V = [V \,,\, v]$
\STATE Set $B = \argmin_{B : \in \reals^{i \times i}} R(U B V^T)$
\STATE Calculate SVD:  $B = P D Q^T$
\STATE Update: $U = U P D$, $V = VQ$
\ENDFOR
\end{algorithmic}
\end{algorithm}

Given a (sparse) vector $\lambda \in \reals^{|\U \times \V|}$ we define the corresponding matrix to be
\[
A(\lambda) ~=~ \sum_{(u,v) \in \U \times \V} \lambda_{u,v} u v^T ~.
\]
Note that $A(\lambda)$ is a linear mapping.
Given a function $R : \reals^{m \times n} \to \reals$, we define a function
\[
f(\lambda) = R(A(\lambda)) = R\left(\sum_{(u,v) \in \U \times \V} \lambda_{u,v} u v^T \right) ~.
\]
It is easy to verify that if $R$ is a convex function over $\reals^{m \times n}$ then $f$ is convex over $\reals^{|\U \times \V|}$ (since $f$ is a composition of $R$ over a linear mapping).
We can therefore reduce the problem given in \eqref{eqn:rankMin} to the problem
\begin{equation} \label{eqn:sparseMin}
\min_{\lambda \in \reals^{|\U \times \V|}: \|\lambda\|_0  \le r} f(\lambda) ~.
\end{equation}

While the optimization problem given in \eqref{eqn:sparseMin} is over an arbitrary large space, we next show that a forward greedy selection procedure can be implemented efficiently.   The greedy algorithm starts with $\lambda = (0,\ldots,0)$. At each iteration, we first find the vectors $(u,v)$ that maximizes the magnitude of the partial derivative of $f(\lambda)$ with respect to $\lambda_{u,v}$.
Assuming that $R$ is differentiable, and using the chain rule, we obtain:
\begin{align*}
\frac{\partial f(\lambda)}{\partial \lambda_{u,v}} =
\inner{\nabla R(A(\lambda)), u v^T}
= u^T \nabla R(A(\lambda)) v ~,
\end{align*}
where $\nabla R(A(\lambda))$ is the $m \times n$ matrix of partial derivatives of $R$ with respect to the elements of $A(\lambda)$. The vectors $u,v$ that maximizes the magnitude of the above expression are the left and right singular vectors corresponding to the maximal singular value of $\nabla R(A(\lambda))$. Therefore, even though the number of elements in $\U \times \V$ is very large, we can still perform a greedy selection of one pair $(u,v) \in \U \times \V$ in an efficient way.

In some situations, even the calculation of the leading singular vectors might be too expensive. We therefore allow approximate maximization, and denote by $\mathrm{ApproxSV}(\nabla R(A(\lambda)),\tau)$ a procedure\footnote{\label{foot:power} An example of such a procedure is the power iteration method, which can implement ApproxSV in time $O(N \log(n)/\tau)$, where $N$ is the number of non-zero elements of $\nabla R(A(\lambda)) $. See Theorem 3.1 in \cite{KuczynskiWo92}.  Our analysis shows that the value of $\tau$ has a mild effect on the convergence of GECO, and one can even choose a constant value like $\tau = 1/2$. This is in contrast to \cite{Hazan08,JaggiSu10} which require the approximation parameter to decrease when the rank increases.  Note also that the ApproxEV procedure described in \cite{Hazan08,JaggiSu10} requires an additive approximation, while we require a multiplicative approximation. } which returns vectors for which
\[
u^T \nabla R(A(\lambda)) v \ge (1-\tau)\max_{p,q} p^T \nabla R(A(\lambda)) q  ~.
\]

Let $U$ and $V$ be matrices whose columns contain the vectors $u$ and $v$ we aggregated so far. The second step of each iteration of the algorithm sets $\lambda$ to be the solution of the following optimization problem:
\begin{equation} \label{eqn:optimize}
\min_{\lambda \in \reals^{|\U \times \V|}} f(\lambda) \\
~~\textrm{s.t.}~~ \supp(\lambda) \subseteq \spn(U) \times \spn(V),
\end{equation}
where $\supp(\lambda) = \{(u,v) : \lambda_{u,v} \neq 0\}$, and $\spn(U),\spn(V)$ are the linear spans of the columns of $U,V$ respectively.

We now describe how to solve \eqref{eqn:optimize}.  Let $s$ be the number of columns of $U$ and $V$. Note that any vector $u \in \spn(U)$ can be written as $U b_u$, where $b_u \in \reals^s$, and similarly, any $v \in \spn(V)$ can be written as $V b_v$. Therefore, if the support of $\lambda$ is in $\spn(U) \times \spn(V)$ we have that $A(\lambda)$ can be written as
\begin{align*}
A(\lambda) &= \sum_{(u,v) \in \supp(\lambda)} \lambda_{u,v} (U b_u) (V b_v)^T \\
&= U \left(\sum_{(u,v) \in \supp(\lambda)} \lambda_{u,v} b_u b_v^T \right) V^T .
\end{align*}
Thus, any $\lambda$ whose support is in $\spn(U) \times \spn(V)$ yields a matrix $B(\lambda) = \sum_{u,v} \lambda_{u,v} b_u b_v^T $. The SVD theorem tells us that the opposite direction is also true, namely, for any $B \in \reals^{s \times s}$ there exists $\lambda$ whose support is in $\spn(U) \times \spn(V)$ that generates $B$ (and also $UBV^T$).  Denote $\tilde{R}(B) = R(U B V^T)$, it follows that \eqref{eqn:optimize} is equivalent to the following unconstrained optimization problem
$\min_{B \in \reals^{s \times s}} \tilde{R}(B)$.
It is easy to verify that $\tilde{R}$ is a convex function, and therefore can be minimized efficiently. Once we obtain the matrix $B$ that minimizes $\tilde{R}(B)$ we can use its SVD to generate the corresponding $\lambda$.

In practice, we do not need to maintain $\lambda$ at all, but only to maintain matrices $U,V$ such that $A(\lambda) = UV^T$. A summary of the pseudo-code is given in Algorithm \ref{algo:GECO}. The runtime of the algorithm is as follows. Step 4 can be performed in time $O(N \log(n)/ \tau)$, where $N$ is the number of non zero elements of $\nabla R(U V^T)$, using the power method (see  Footnote \ref{foot:power}). Since our analysis (given in \secref{sec:genAnalysis}) allows $\tau$ to be a constant (e.g. $1/2$), this means that the runtime is $O(N \log(n))$. The runtime of Step 6 depends on the structure of the function $R$. We specify it when describing specific applications of GECO in later sections. Finally, the runtime of Step 7 is at most $r^3$, and step 8 takes $O(r^2(m+n))$.

\subsection{Variants of GECO} \label{sec:variants}

\subsubsection{How to choose $(u,v)$}   \label{chooseDirection}
GECO chooses $(u,v)$ to be the leading singular vectors, which are the maximizers of $u^T \nabla R(A)\, v$ over unit spheres of $\reals^m$ and $\reals^n$. Our analysis in the next section guarantees that this choice yields a sufficient  decrease of the objective function. However, there may be a pair $(u,v)$ which leads to an even larger decrease in the objective value. Choosing such a direction can lead to improved performance. We note that our analysis in the next section still holds, as long as the direction we choose leads to a larger decrease in the objective value, relative to the increase we can get from using the leading singular vectors. In \secref{sec:experiments} we describe a method that finds better directions.

\subsubsection{Additional replacement steps} \label{sec:replacements}
Each iteration of GECO increases the rank by $1$. In many cases, it is possible to decrease the objective by replacing one of the components without increasing the rank. If we verify that this replacement step indeed decreases the objective (by simply evaluating the objective before and after the change), then the analysis we present in the next section remains valid. We now describe a simple way to perform a replacement. We start with finding a candidate pair $(u,v)$ and perform steps $5-7$ of GECO. Then, we approximate the matrix $B$ by zeroing its smallest singular value. Let $\hat{B}$ denote this approximation. We next check if $R(U\hat{B}V^T)$ is strictly smaller than the previous objective value. If yes, we update $U,V$ based on $\hat{B}$ and obtain that the rank of $UV^T$ has not been increased while the objective has been decreased. Otherwise, we update $U,V$ based on $B$, thus increasing the rank, but our analysis tells us that  we are guaranteed to sufficiently decrease the objective. If we restrict the algorithm to perform at most $O(1)$ attempted replacement steps between each rank-increasing iteration, then its runtime guarantee is only increased by an $O(1)$ factor, and all the convergence guarantees remain valid.

\subsubsection{Adding Schatten norm regularization} \label{sec:reg}
In some situations, rank constraint is not enough for obtaining good generalization guarantees and one can consider objective functions $R(A)$ which contains additional regularization of the form $h(\lambda(A))$, where $\lambda(A)$ is the vector of singular values of $A$ and $h$ is a vector function such as $h(x) = \|x\|_p^2$. For example, if $p=2$, this regularization term is equivalent to Frobenius  norm regularization of $A$. In general, adding a convex regularization term should not pose any problem. A simple trick to do this is to orthonormalize the columns of $U$ and $V$ before Step 6. Therefore, for any $B$, the singular values of $B$ equal the singular values of $U B V^T$. Thus, we can solve the problem in Step 6 more efficiently while regularizing $B$ instead of the larger matrix $U B V^T$.

\subsubsection{Optimizing over diagonal matrices $B$}
Step $6$ of GECO involves solving a problem with $i^2$ variables, where $i \in \{1,\ldots,r\}$. When $r$ is small this is a reasonable computational effort. However, when $r$ is large, Steps $6-7$ can be expensive. For example, in matrix completion problems, the complexity of Step $6$ can scale with $r^6$. If runtime is important, it is possible to restrict $B$ to be a diagonal matrix, or in other words, we only optimize over the coefficients of $\lambda$ corresponding to $U$ and $V$ without changing the support of $\lambda$. Thus, in step $6$ we solve a problem with $i$ variables, and Step $7$ is not needed. It is possible to verify that the analysis we give in the next section still holds for this variant.

\section{Analysis} \label{sec:genAnalysis}

In this section we give a competitive analysis for GECO. The first theorem shows that after performing $r$ iterations of GECO, its solution is not much worse than the solution of \emph{all} matrices $\bar{A}$, whose trace norm\footnote{The trace norm of a matrix is the sum of its singular values.} is bounded by a function of $r$. The second theorem shows that with additional assumptions, we can be competitive with matrices whose rank is at most $r$.
\ShortPaper{The proofs can be found in the long version of this paper.}{The proofs are given in the Appendix.}

To formally state the theorems we first need to define a smoothness property of the function $f$.
\begin{definition}[smoothness]
We say that $f$ is $\beta$-smooth if for any $\lambda$ and $(u,v) \in \U \times \V$ we have
\[
f(\lambda + \eta \e^{u,v}) \le   f(\lambda) + \eta\, \frac{\partial f(\lambda)}{\partial \lambda_{u,v}}+  \frac{\beta\,\eta^2}{2}  ~,
\]
where $\e^{u,v}$ is the all zeros vector except $1$ in the coordinate corresponds to $(u,v)$.
We say that $R$ is $\beta$-smooth if the function $f(\lambda) = R(A(\lambda))$ is $\beta$-smooth.
\end{definition}

\begin{theorem} \label{thm:main}
Fix some $\epsilon > 0$. Assume that GECO (or one of its variants) is run with a $\beta$-smooth function $R$, a rank constraint $r$, and a tolerance parameter $\tau \in [0,1)$. Let $A$ be its output matrix. Then, for all matrices $\bar{A}$ with
\[ \|\bar{A}\|^2_{\tr} \le \frac{\epsilon\,(r+1) (1-\tau)^2}{2\beta} \]
we have that $R(A) \le R(\bar{A}) + \epsilon$.
\end{theorem}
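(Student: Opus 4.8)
The plan is to run the standard fully-corrective greedy potential argument, tracking the suboptimality $\epsilon_t \eqdef R(A^{(t)}) - R(\bar A)$, where $A^{(t)} = UV^T$ denotes the iterate at the start of the $t$-th pass through the loop (so $A^{(0)} = 0$ and $A = A^{(r)}$), and where $\bar\lambda$ is the SVD-based representation of $\bar A$ for which $\|\bar\lambda\|_1 = \|\bar A\|_{\tr}$. If ever $\epsilon_t \le 0$ then $R(A) \le R(A^{(t)}) \le R(\bar A)$ and we are done, so throughout I assume $\epsilon_t > 0$. The goal is to show that each iteration shrinks $\epsilon_t$ according to a recurrence of the form $\epsilon_{t+1} \le \epsilon_t - c\,\epsilon_t^2$, and then to solve this recurrence.

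First I would quantify the progress made in a single iteration. Write $G_t \eqdef \nabla R(A^{(t)})$, let $(u,v)$ be the pair returned by ApproxSV, and set $d \eqdef u^T G_t v$. Since $\max_{p,q} p^T G_t q = \sigma_{\max}(G_t)$, the ApproxSV guarantee gives $d \ge (1-\tau)\,\sigma_{\max}(G_t) \ge 0$. Because appending $(u,v)$ and then performing the fully corrective Step~6 minimizes $R(UBV^T)$ over all $B$ — in particular over the one-parameter family $\lambda^{(t)} + \eta\,\e^{u,v}$ — the $\beta$-smoothness definition evaluated at the minimizing $\eta = -d/\beta$ yields
\[
R(A^{(t+1)}) \le R(A^{(t)}) - \frac{d^2}{2\beta} \le R(A^{(t)}) - \frac{(1-\tau)^2\,\sigma_{\max}(G_t)^2}{2\beta}.
\]

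Next I would lower bound $\sigma_{\max}(G_t)$ in terms of $\epsilon_t$. The first-order optimality of Step~6 gives $U^T G_t V = 0$, and hence $\inner{G_t, A^{(t)}} = \inner{U^T G_t V, B} = 0$ (this also holds trivially at $t=0$, where $A^{(0)}=0$). Combining this with convexity of $R$ and the duality of the spectral and trace norms,
\[
\epsilon_t = R(A^{(t)}) - R(\bar A) \le \inner{G_t, A^{(t)} - \bar A} = -\inner{G_t, \bar A} \le \sigma_{\max}(G_t)\,\|\bar A\|_{\tr},
\]
so $\sigma_{\max}(G_t) \ge \epsilon_t/\|\bar A\|_{\tr}$. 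Substituting back produces $\epsilon_{t+1} \le \epsilon_t - c\,\epsilon_t^2$ with $c \eqdef (1-\tau)^2/(2\beta\,\|\bar A\|_{\tr}^2)$. I would close by the usual reciprocal telescoping: dividing $\epsilon_t - \epsilon_{t+1} \ge c\,\epsilon_t^2$ by $\epsilon_t\epsilon_{t+1}$ and using $\epsilon_{t+1}\le\epsilon_t$ gives $1/\epsilon_{t+1} - 1/\epsilon_t \ge c$; the step-$0$ inequality $c\,\epsilon_0^2 \le \epsilon_0 - \epsilon_1 < \epsilon_0$ furnishes the extra $1/\epsilon_0 > c$; summing over $t=0,\dots,r-1$ then gives $1/\epsilon_r \ge c(r+1)$, i.e. $\epsilon_r \le 2\beta\,\|\bar A\|_{\tr}^2/\big((1-\tau)^2(r+1)\big)$. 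The assumed bound on $\|\bar A\|_{\tr}^2$ makes this at most $\epsilon$, which is the claim.

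The main obstacle — and the step where the fully corrective structure is essential — is establishing $\inner{G_t, A^{(t)}} = 0$, which is what converts the convexity inequality into the clean bound $\epsilon_t \le \sigma_{\max}(G_t)\,\|\bar A\|_{\tr}$; without exact minimization in Step~6 one could only control a single coordinate of the gradient rather than its whole projection onto $\spn(U)\times\spn(V)$. Two secondary points of care are recovering the factor $r+1$ rather than $r$ (which needs the separate step-$0$ argument above) and verifying that the ApproxSV guarantee genuinely controls the \emph{magnitude} $|d|$, so that squaring retains the full $(1-\tau)^2$ factor with a single constant $\tau$ independent of $t$.
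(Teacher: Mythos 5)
Your proof is correct, and its skeleton is the same as the paper's: a per-iteration progress bound of the form $\epsilon_{t+1} \le \epsilon_t - \tfrac{(1-\tau)^2}{2\beta \|\bar{A}\|_{\tr}^2}\,\epsilon_t^2$, followed by solving this recurrence to conclude $\epsilon_r \le \epsilon$ under the assumed bound on $\|\bar{A}\|_{\tr}^2$. Where you genuinely diverge is in \emph{how} the per-iteration bound is obtained. The paper proves it as a coordinate-space lemma (\lemref{lma:key}, generalizing a result of \cite{ShalevSrZh10}): it works in the infinite-dimensional $\lambda$-representation, normalizes signs so that $\blambda \ge 0$ and $u^T \nabla R(A(\lambda)) v \le 0$, bounds $s \min_{(p,q)} \nabla_{p,q}$ by the weighted sum $\sum_{(p,q)} \blambda_{p,q} \nabla_{p,q}$, and uses the vanishing of the gradient coordinates on the support $I$ of the current iterate. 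You argue directly in matrix space: first-order optimality of the fully corrective step gives $U^T G_t V = 0$, hence $\inner{G_t, A^{(t)}} = 0$, and convexity plus spectral/trace-norm duality gives $\epsilon_t \le \sigma_{\max}(G_t)\,\|\bar{A}\|_{\tr}$. These are the same ideas in different clothing --- your duality step is exactly the paper's $\ell_1$/$\ell_\infty$ argument transported to matrices, via $\|\blambda\|_1 = \|\bar{A}\|_{\tr}$ and $\max_{p,q} |p^T G_t q| = \sigma_{\max}(G_t)$ --- but your version avoids the sign bookkeeping and the $\bar{I} \setminus I$ manipulations, and it makes the role of the corrective step transparent. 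You also solve the recurrence by hand (reciprocal telescoping, with the step-$0$ observation supplying the $r+1$ rather than $r$), where the paper cites Lemma~B.2 of \cite{ShalevSrZh10} as a black box. One small caveat: the theorem covers the variants of GECO, and for the diagonal-$B$ variant the full matrix $U^T G_t V$ need not vanish; however its diagonal does, which still gives $\inner{G_t, A^{(t)}} = 0$ since there $A^{(t)} = U \diag(b) V^T$, so your argument extends with one extra sentence.
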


The previous theorem shows competitiveness with matrices of low trace norm. Our second theorem shows that with additional assumptions on the function $f$ we can be competitive with matrices of low rank as well. We need the following definition.

\begin{definition}[strong convexity]
Let $I \subset \U \times \V$. We say that $f$ is $\sigma$-strongly-convex over $I$ if for any $\lambda_1,\lambda_2$ whose support\footnote{The support of $\lambda$ is the set of $(u,v)$ for which $\lambda_{u,v} \neq 0$.} is in $I$ we have
\[
f(\lambda_1)-f(\lambda_2) - \inner{\nabla f(\lambda_2),\lambda_1-\lambda_2} \ge
\frac{\sigma}{2} \|\lambda_1 - \lambda_2\|_2^2 ~.
\]
We say that $R$ is $\sigma$-strongly-convex over $I$ if the function $f(\lambda) = R(A(\lambda))$ is $\sigma$-strongly-convex over $I$.
\end{definition}

\begin{theorem} \label{thm:sec}
Assume that the conditions of \thmref{thm:main} hold.  Then, for any $\bar{A}$ such that
\[
\rank(\bA) \le \frac{\epsilon\,(r+1) (1-\tau)^2\,\sigma}{4\beta R(0)} ~.
\]
and such that $R$ is $\sigma$-strongly-convex over the singular vectors of $\bA$, we have that $R(A) \le R(\bar{A}) + \epsilon$.
\end{theorem}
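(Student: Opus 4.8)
The plan is to reduce \thmref{thm:sec} to \thmref{thm:main} by exhibiting, for the given low-rank $\bA$, a competitor whose squared trace norm is at most $\frac{\epsilon(r+1)(1-\tau)^2}{2\beta}$, so that \thmref{thm:main} applies verbatim. The two ingredients I would combine are the elementary inequality $\|M\|_{\tr}\le\sqrt{\rank(M)}\,\|M\|_F$ (Cauchy--Schwarz applied to the singular values) and the strong-convexity assumption, which I would use to control a Frobenius norm by $R(0)$.

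First I would dispose of the trivial case. Since no iteration of GECO increases the objective and Step~6 can always return $B=0$ (for which $UBV^T=0$), the output satisfies $R(A)\le R(0)$; hence if $R(\bA)\ge R(0)$ the claim $R(A)\le R(\bA)+\epsilon$ is immediate. So I may assume $R(\bA)\le R(0)$, and, as is standard for a loss, that $R\ge 0$. The heart of the argument is then to replace $\bA$ by a better-behaved competitor. Let $I\subset\U\times\V$ be the set of singular-vector pairs of $\bA$ over which $R$ is assumed $\sigma$-strongly-convex, and let $\lambda^\star=\argmin_{\supp(\lambda)\subseteq I} f(\lambda)$, with corresponding matrix $A^\star=A(\lambda^\star)$ (the strong convexity over $I$ guarantees this minimizer exists and is unique). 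Because the SVD of $\bA$ is itself supported on $I$, we have $R(A^\star)\le R(\bA)$ and $\rank(A^\star)\le\rank(\bA)$, so it suffices to prove $R(A)\le R(A^\star)+\epsilon$.

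Next I would apply the strong-convexity inequality to the pair $\lambda_1=0$, $\lambda_2=\lambda^\star$ (both supported in $I$). The first-order optimality of $\lambda^\star$ over the subspace $\{\lambda:\supp(\lambda)\subseteq I\}$ forces $\inner{\nabla f(\lambda^\star),\lambda^\star}=0$, so the inner-product term cancels and we obtain $\frac{\sigma}{2}\|\lambda^\star\|_2^2\le f(0)-f(\lambda^\star)=R(0)-R(A^\star)\le R(0)$. Since $\{u_iv_j^T\}$ is orthonormal in the Frobenius inner product, $\|\lambda^\star\|_2=\|A^\star\|_F$, whence $\|A^\star\|_F^2\le 2R(0)/\sigma$. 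Chaining the bounds gives $\|A^\star\|_{\tr}^2\le\rank(A^\star)\,\|A^\star\|_F^2\le\rank(\bA)\cdot\frac{2R(0)}{\sigma}$, and substituting the assumed rank bound $\rank(\bA)\le\frac{\epsilon(r+1)(1-\tau)^2\sigma}{4\beta R(0)}$ the factors cancel to yield exactly $\|A^\star\|_{\tr}^2\le\frac{\epsilon(r+1)(1-\tau)^2}{2\beta}$. \thmref{thm:main} then gives $R(A)\le R(A^\star)+\epsilon\le R(\bA)+\epsilon$.

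The main obstacle, and the step that dictates the whole construction, is the stray gradient term $\inner{\nabla f,\cdot}$ in the strong-convexity inequality: working directly with the arbitrary matrix $\bA$ leaves an uncontrolled term $\inner{\nabla f(0),\bar\lambda}$, which is precisely why I pass to the minimizer $\lambda^\star$ over $I$, where first-order optimality annihilates it. The only other point requiring care is the non-negativity (or normalization) of $R$ invoked to pass from $R(0)-R(A^\star)$ to $R(0)$, together with the reduction to the case $R(\bA)\le R(0)$.
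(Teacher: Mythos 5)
Your proof is correct, and its skeleton is the same as the paper's: convert the rank bound plus strong convexity into a squared trace-norm bound of the form required by \thmref{thm:main}, then invoke \thmref{thm:main}. Where you differ is in how that bound is obtained. The paper simply cites Lemma 2.2 of Shalev-Shwartz, Srebro and Zhang (2010), applied to the vector $\blambda$ of singular values of $\bA$ itself, to get $\|\bA\|_\tr^2 \le 2\,\rank(\bA)\,R(0)/\sigma$, and stops there. You instead prove the needed inequality from scratch, and, importantly, you do not apply it to $\bA$ but to the minimizer $A^\star = A(\lambda^\star)$ of $f$ over the span of the singular-vector pairs of $\bA$, using first-order optimality to annihilate the gradient term in the strong-convexity inequality and then $\|\lambda^\star\|_1^2 \le \|\lambda^\star\|_0 \|\lambda^\star\|_2^2$. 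This detour is not cosmetic: the inequality $\|\bA\|_\tr^2 \le 2\,\rank(\bA)\,R(0)/\sigma$ is false for an arbitrary $\bA$ satisfying the theorem's hypotheses (take $R(A) = \|A - C\|_F^2$ with $\|C\|_F$ tiny and $\bA$ a rank-one matrix of huge norm: then $R$ is $2$-smooth and $2$-strongly convex over any orthonormal pairs, $R(0)=\|C\|_F^2$ is small, yet $\|\bA\|_\tr$ is unbounded), so any correct form of the cited lemma must, as yours does, pass through a minimizer over the support; the theorem survives because of exactly the chain you wrote, $R(A) \le R(A^\star) + \epsilon \le R(\bA) + \epsilon$ together with $\rank(A^\star) \le \rank(\bA)$, which the paper's one-line invocation glosses over. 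Two smaller remarks: your explicit assumption $R \ge 0$ is genuinely needed (the statement cannot survive an additive shift of $R$, since $R(0)$ appears in the rank bound), and the paper leaves it implicit inside the cited lemma; and your preliminary case split on $R(\bA) \ge R(0)$ is harmless but never used, since the main argument only requires $R(A^\star) \ge 0$, not $R(\bA) \le R(0)$.
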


We discuss the implications of these theorems for several applications in the next sections.

\section{Application I: Matrix Completion}

Matrix completion is the problem of predicting the entries of some unknown target matrix $Y \in \reals^{m \times n}$ based on a random subset of observed entries, $E \subset [m] \times [n]$. For example, in the famous Netflix problem, $m$ represents the number of users, $n$ represents the number of movies, and $Y_{i,j}$ is a rating user $i$ gives to movie $j$.  One approach for learning the matrix $Y$ is to find a matrix $A$ of low rank which approximately agrees with $Y$ on the entries of $E$ (in mean squared error terms). Using the notation of this paper, we would like to minimize the objective
\[
R(A) =  \frac{1}{|E|}\sum_{(i,j) \in E} (A_{i,j}-Y_{i,j})^2,
\]
over low rank matrices $A$.

We now specify GECO for this objective function. It is easy to verify that the $(i,j)$ element of $\nabla R(A)$ is $2(A_{i,j}-Y_{i,j})$ if $(i,j) \in E$ and $0$ otherwise.  The number of non-zero elements of $\nabla R(A)$ is at most $|E|$, and therefore Step 4 of GECO can be implemented using the power method in time $O(|E| \log(n))$.
Given matrices $U,V$, let $u_i$ be the $i$'th row of $U$ and $v_j$ be the $j$'th row of $V$. We have that the $(i,j)$ element of the matrix $U B V^T$ can be written as $\inner{\vect(u_i^T v_j) , \vect(B)}$, where $\vect$ of a matrix is the vector obtained by taking all the elements of the matrix column wise. We can therefore rewrite
$R(U B V^t)$ as $\frac{1}{|E|} \sum_{(i,j) \in E} (\inner{\vect(u_i^T v_j) , \vect(B)} - Y_{i,j})$, which makes Step 6 of GECO a vanilla least squares problem over at most $r^2$ variables. The runtime of this step is therefore bounded by $O(r^6 + |E|r^2)$.

\subsection{Analysis} \label{sec:analysis}

To apply our analysis for matrix completion we first bound the smoothness parameter.

\begin{lemma} \label{lem:smooth}
For matrix completion  the smoothness parameter is at most $2/|E|$.
\end{lemma}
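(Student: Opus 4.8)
The plan is to verify the defining smoothness inequality directly, exploiting that $R$ is a quadratic function and that the coordinate perturbation $\lambda \mapsto \lambda + \eta\, \e^{u,v}$ translates, under the linear map $A(\cdot)$, into the rank-one additive perturbation $A(\lambda) \mapsto A(\lambda) + \eta\, u v^T$. Writing $A = A(\lambda)$ and expanding the square in the definition of $R$, I would compute
\[
R(A + \eta\, u v^T) = R(A) + \frac{2\eta}{|E|}\sum_{(i,j)\in E}(A_{i,j}-Y_{i,j})\,u_i v_j + \frac{\eta^2}{|E|}\sum_{(i,j)\in E} u_i^2 v_j^2 .
\]

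First I would identify the linear term with $\eta\,\partial f(\lambda)/\partial \lambda_{u,v}$. Since the $(i,j)$ entry of $\nabla R(A)$ equals $\tfrac{2}{|E|}(A_{i,j}-Y_{i,j})$ on $E$ and $0$ elsewhere (as already noted in the application section), the middle term is exactly $\eta \inner{\nabla R(A),\, uv^T} = \eta\,\partial f(\lambda)/\partial\lambda_{u,v}$, matching the first-order term of the smoothness definition with no slack. This reduces the claim to showing that the quadratic remainder is at most $\tfrac{\beta\eta^2}{2}$ for $\beta = 2/|E|$.

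The only remaining step, and the one meriting a little care, is the bound $\frac{1}{|E|}\sum_{(i,j)\in E} u_i^2 v_j^2 \le \frac{1}{|E|}$. This holds because each summand is nonnegative, so restricting the sum to $E \subseteq [m]\times[n]$ only decreases it, and the full sum factors as $\big(\sum_i u_i^2\big)\big(\sum_j v_j^2\big) = \|u\|^2\|v\|^2 = 1$, using that $u \in \U$ and $v \in \V$ are unit vectors. Hence the quadratic term is at most $\eta^2/|E| = \tfrac{\beta\eta^2}{2}$ with $\beta = 2/|E|$, giving the desired smoothness. I expect no genuine obstacle: the argument is a one-line quadratic expansion followed by the factorization bound, and the only thing to watch is keeping the $1/|E|$ normalization consistent across $R$, its gradient, and the quadratic remainder.
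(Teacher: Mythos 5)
Your proof is correct and follows essentially the same route as the paper's: expand the square entrywise under the rank-one perturbation $\eta\, u v^T$, identify the linear term with $\eta\,\partial f(\lambda)/\partial \lambda_{u,v}$, and bound the quadratic remainder via $\sum_{(i,j)\in E} u_i^2 v_j^2 \le \bigl(\sum_i u_i^2\bigr)\bigl(\sum_j v_j^2\bigr) = 1$. The only cosmetic difference is that the paper phrases the averaging over $E$ as an expectation; the substance is identical.
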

\begin{proof}
For any $u,v$ and $i,j$ we can rewrite
$(A_{i,j} + \eta u_i v_j -Y_{i,j})^2$ as 
\[(A_{i,j}-Y_{i,j})^2
+ 2(A_{i,j}-Y_{i,j})\,\eta u_i v_j + \eta^2 u_i^2 v_j^2 ~.\]
Taking expectation over $(i,j) \in E$ we obtain:
\[
f(\lambda + \eta \e^{u,v}) \le f(\lambda) + \eta \nabla_{u,v}f(\lambda) + \eta^2 \frac{1}{|E|} \sum_{(i,j) \in E} u_i^2 v_j^2 ~.
\]
Since
$\sum_{(i,j) \in E} u_i^2 v_j^2 \le \sum_{i} u_i^2 \sum_j v_j^2 = 1$, the proof follows.
\end{proof}

Our general analysis therefore implies that for any $\bar{A}$, GECO can find a matrix with rank $r \le O(\|\bar{A}\|_\tr^2/(\epsilon |E|))$, such that $R(A) \le R(\bar{A}) + \epsilon$.

Let us now discuss the implications of this result for the number of observed entries required for predicting the entire entries of $Y$. Suppose that the entries $E$ are sampled i.i.d. from some unknown distribution $D \in \reals^{m \times n}$, $D_{i,j} \ge 0$ for all $i,j$ and $\sum_{i,j} D_{i,j} = 1$. Denote the generalization error of a matrix $A$ by
\[
F(A) = \sum_{i,j} D_{i,j} (A_{i,j} - Y_{i,j})^2 ~.
\]
Using generalization bounds for low rank matrices (e.g. \cite{SrebroAlJa05}), it is possible to show that for any matrix $A$ of rank at most $r$ we have that with high probability\footnote{To be more precise, this bound requires that the elements of $A$ are bounded by a constant. But, since we can assume that the elements of $Y$ are bounded by a constant, it is always possible to clip the elements of $A$ to the range of the elements of $Y$ without increasing $F(A)$.}
\[
|F(A)-R(A)| \le \tilde{O}(\sqrt{r(m+n)/|E|}) ~.
\]
Combining this with our analysis for GECO, and optimizing $\epsilon$, it is easy to derive the following:
\begin{corollary}
 Fix some matrix $\bA$. Then, GECO can find a matrix $A$ such that with high probability over the choice of the entries in $E$
 \[
 F(A) \le F(\bA) + \tilde{O}\left(  \left(\frac{\|\bar{A}\|_\tr \sqrt{m+n} }{|E|}  \right)^{2/3} \right) ~.
 \]
 \end{corollary}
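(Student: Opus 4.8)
The plan is to bound the excess generalization error $F(A)-F(\bA)$ by inserting the empirical error of both matrices and then controlling the three resulting terms separately. Concretely, I would write
\[
F(A)-F(\bA) = \underbrace{\big(F(A)-R(A)\big)}_{(\mathrm{I})} + \underbrace{\big(R(A)-R(\bA)\big)}_{(\mathrm{II})} + \underbrace{\big(R(\bA)-F(\bA)\big)}_{(\mathrm{III})} .
\]
Term $(\mathrm{II})$ is the purely optimization-side quantity already handled by the algorithm, while $(\mathrm{I})$ and $(\mathrm{III})$ are statistical and will be controlled by concentration of the empirical squared loss around its mean.

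For term $(\mathrm{I})$, the output $A$ has rank at most $r$, so I would apply the uniform generalization bound for rank-$r$ matrices quoted above to get $(\mathrm{I}) \le \tilde{O}(\sqrt{r(m+n)/|E|})$ with high probability. For term $(\mathrm{II})$, I invoke \thmref{thm:main} together with \lemref{lem:smooth} (which gives $\beta = 2/|E|$) and a constant tolerance $\tau$: running GECO with rank $r = O(\|\bA\|_\tr^2/(\epsilon |E|))$ guarantees $R(A) \le R(\bA)+\epsilon$, so $(\mathrm{II}) \le \epsilon$. For term $(\mathrm{III})$ the key observation is that $\bA$ is fixed in advance, so $R(\bA)$ is an empirical average of i.i.d. bounded terms with mean $F(\bA)$; a single Hoeffding/Bernstein estimate (with no union bound over a matrix class) yields $(\mathrm{III}) \le \tilde{O}(1/\sqrt{|E|})$, which will turn out to be of lower order.

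Substituting the rank $r = O(\|\bA\|_\tr^2/(\epsilon|E|))$ into the bound on $(\mathrm{I})$ gives $\sqrt{r(m+n)/|E|} = \tilde{O}\!\big(\|\bA\|_\tr\sqrt{m+n}/(\sqrt{\epsilon}\,|E|)\big)$, so that, writing $C = \|\bA\|_\tr\sqrt{m+n}/|E|$, the total excess is $F(A)-F(\bA) \le \tilde{O}(\epsilon + C/\sqrt{\epsilon})$ up to the negligible $\tilde{O}(1/\sqrt{|E|})$ term. The final step is to optimize the free parameter $\epsilon$ (equivalently, to choose the rank $r$ at which GECO is run): minimizing $\epsilon + C/\sqrt{\epsilon}$ gives $\epsilon = \Theta(C^{2/3})$ with value $\Theta(C^{2/3})$, which is exactly the claimed $\tilde{O}\big((\|\bA\|_\tr\sqrt{m+n}/|E|)^{2/3}\big)$.

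The step requiring the most care is the asymmetric treatment of the two statistical terms: because $A$ depends on the sampled set $E$, a uniform deviation bound over the whole class of rank-$r$ matrices is unavoidable for $(\mathrm{I})$, which is what injects the $\sqrt{r(m+n)}$ factor and ultimately forces the $2/3$ exponent, whereas $\bA$ is data-independent so $(\mathrm{III})$ needs only pointwise concentration and is harmless. One must also ensure the squared-loss entries are bounded so that concentration applies; this is the role of the clipping remark in the footnote, under which clipping $A$ to the range of $Y$ never increases $F(A)$. The remaining $\epsilon$-optimization is routine.
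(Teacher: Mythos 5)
Your proposal is correct and takes essentially the same route the paper intends but leaves as a sketch ("combining this with our analysis for GECO, and optimizing $\epsilon$"): the three-term decomposition with the uniform rank-$r$ generalization bound applied to the data-dependent output $A$, \thmref{thm:main} plus \lemref{lem:smooth} for the empirical term, pointwise concentration for the fixed $\bA$, and the balancing choice $\epsilon = \Theta\left(\left(\|\bA\|_\tr\sqrt{m+n}/|E|\right)^{2/3}\right)$ giving the $2/3$ exponent. Your explicit remark on the asymmetry between $A$ (needs a uniform bound) and $\bA$ (needs only Hoeffding) is precisely the point the paper glosses over.
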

 Without loss of generality assume that $m
\le n$. It follows that if $\|\bA\|_\tr$ is order of $\sqrt{mn}$ then order of $n^{3/2}$ entries are suffices to learn the matrix $Y$. This matches recent learning-theoretic guarantees for distribution-free learning with the trace norm \cite{ShalevSha11}.

\section{Application II: Robust Low Rank Matrix Approximation}

A very common problem in data analysis is finding a low-rank matrix $A$ which approximates a given matrix $Y$, namely solving $\min_{A:\text{rank}(A)\leq r} d(A,Y)$, where $d$ is some discrepancy measure. For simplicity, assume that $Y \in \reals^{n \times n}$. When $d(A,V)$ is the normalized Frobenius norm $d(A,V)=\tfrac{1}{n^2} \sum_{i,j}(A_{i,j}-Y_{i,j})^2$, this problem can be solved efficiently via SVD. However, due to the use of the Frobenius norm, this procedure is well-known to be sensitive to outliers.

One way to make the procedure more robust is to replace the Frobenius norm by a less sensitive norm, such as the $l_1$ norm $d(A,V)=\tfrac{1}{n^2} \sum_{i,j}|A_{i,j}-Y_{i,j}|$ (see for instance \cite{BacBeFal96},\cite{CrFil98},\cite{KeKan05}). Unfortunately, there are no known efficient algorithms to obtain the global optimum of this objective function, subject to a rank constraint on $A$. However, using our proposed algorithm, we can efficiently find a low-rank matrix which approximately minimizes $d(A,V)$. In particular, we can apply it to any convex discrepancy measure $d$, including robust ones such as the $l_1$ norm. The only technicality is that our algorithm requires $d$ to be smooth, which is not true in the case of the $l_1$ norm. However, this can be easily alleviated by working with smoothed versions of the $l_1$ norm, which replace the absolute value by a smooth approximation. One example is a Huber loss, defined as $L(x)=x^2/2$ for $|x|\leq 1$, and $L(x)=|x|-1/2$ otherwise.

\begin{lemma} \label{lem:smoothRPCA}
The smoothness parameter of $d(A,Y)=\frac{1}{n^2}\sum_{i,j}L(A_{i,j}-Y_{i,j})$, where $L$ is the Huber loss, is at most $1/n^2$.
\end{lemma}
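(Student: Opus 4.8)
The plan is to mimic the structure of the proof of \lemref{lem:smooth}, reducing the matrix-level smoothness claim to a one-dimensional property of the Huber loss. First I would note that since $A(\lambda + \eta \e^{u,v}) = A(\lambda) + \eta u v^T$, perturbing the coordinate $\lambda_{u,v}$ by $\eta$ shifts each entry $A_{i,j}$ by exactly $\eta u_i v_j$. Hence
\[
f(\lambda + \eta \e^{u,v}) = \frac{1}{n^2}\sum_{i,j} L\bigl(A_{i,j} - Y_{i,j} + \eta u_i v_j\bigr),
\]
and the whole problem decouples into $n^2$ independent scalar terms.

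The key one-dimensional fact is that the Huber loss is $1$-smooth as a scalar function, i.e. $L(a + \delta) \le L(a) + L'(a)\delta + \tfrac12 \delta^2$ for all $a,\delta$. I would establish this by checking that $L$ is continuously differentiable with $L'(x) = x$ for $|x| \le 1$ and $L'(x) = \sgn(x)$ otherwise, and that $L'$ is $1$-Lipschitz (its derivative equals $1$ on $(-1,1)$ and $0$ outside, never exceeding $1$ in magnitude). A $1$-Lipschitz derivative yields exactly the quadratic upper bound above. This is the only step requiring genuine care, because of the kink at $|x|=1$; but since $L'$ is continuous there and its slope only decreases as $|x|$ crosses $1$, the Lipschitz bound holds globally.

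Applying the scalar bound entrywise with $a = A_{i,j}-Y_{i,j}$ and $\delta = \eta u_i v_j$, then summing over $(i,j)$ and dividing by $n^2$, gives
\[
f(\lambda + \eta \e^{u,v}) \le f(\lambda) + \eta\,\frac{\partial f(\lambda)}{\partial \lambda_{u,v}} + \frac{\eta^2}{2}\cdot\frac{1}{n^2}\sum_{i,j} u_i^2 v_j^2,
\]
where the linear term is identified as $\tfrac{1}{n^2}\sum_{i,j} L'(A_{i,j}-Y_{i,j})\, u_i v_j = \partial f(\lambda)/\partial \lambda_{u,v}$, exactly via the chain-rule computation given before \eqref{eqn:optimize}. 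Finally I bound the quadratic coefficient by factoring the sum: $\sum_{i,j} u_i^2 v_j^2 = \bigl(\sum_i u_i^2\bigr)\bigl(\sum_j v_j^2\bigr) = \|u\|^2\|v\|^2 = 1$, since $u \in \U$ and $v \in \V$ are unit vectors. This yields $\beta = 1/n^2$, completing the proof. The main (and essentially only) obstacle is verifying the scalar $1$-smoothness of the Huber loss across its kink; the remainder is the same bookkeeping as in \lemref{lem:smooth}.
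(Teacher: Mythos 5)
Your proof is correct and follows essentially the same route as the paper's: apply the scalar $1$-smoothness of the Huber loss entrywise with $\delta = \eta u_i v_j$, average over the $n^2$ entries, and use $\sum_{i,j} u_i^2 v_j^2 = \|u\|^2\|v\|^2 = 1$. If anything, your justification of the scalar step (via the $1$-Lipschitz derivative of $L$, handling the kink at $|x|=1$) is more careful than the paper's one-line remark that $L$ is upper bounded by the parabola $x^2/2$, and you also track the factor $\tfrac12$ in the quadratic term more cleanly.
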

\begin{proof} 
It is easy to verify that the smoothness parameter of $L(x)$ is $1$, since $L(x)$ is upper bounded by the parabola $x^2/2$, whose smoothness parameter is exactly $1$. Therefore,
\begin{align*}
L(A_{i,j}+\eta u_i v_j-Y_{i,j}) &\leq L(A_{i,j}-Y_{i,j})\\
&+\eta L'(A_{i,j}-Y_{i,j})u_i v_j + \frac{\eta^2}{2}u_i^2 v_j^2.
\end{align*}
Taking the average over all entries, this implies that
\[
f(\lambda+\eta \mathbf{e}^{u,v})\leq f(\lambda)+\eta \nabla_{u,v}f(\lambda)+\frac{\eta^2}{n^2}\sum_{i,j}u_i^2 v_j^2.
\]
Since the last term is at most $\eta^2/n^2$, the result follows.
\end{proof}
%
We therefore obtain:
\begin{corollary}
Let $d(A,Y)$ be the Huber loss discrepancy as defined in \lemref{lem:smoothRPCA}. Then, for any matrix $\bA$, GECO can find a matrix $A$ with $d(A,Y) \le d(\bA,Y)+\epsilon$ and  $\rank(A) = O(\frac{\|\bA\|_\tr^2}{n^2\epsilon})$.
\end{corollary}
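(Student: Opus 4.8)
The plan is to combine \lemref{lem:smoothRPCA} with the general competitive guarantee of \thmref{thm:main}, treating the Huber-loss discrepancy $d(\cdot,Y)$ as the convex-smooth objective $R$ fed into GECO. First I would invoke \lemref{lem:smoothRPCA} to certify that $R = d(\cdot,Y)$ is $\beta$-smooth with $\beta \le 1/n^2$, which is precisely the hypothesis \thmref{thm:main} requires. The Huber loss was introduced exactly so that this smoothness holds, so nothing further is needed to justify applicability.

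Next I would apply \thmref{thm:main} directly. Fixing the tolerance at a constant value, say $\tau = 1/2$ (which the analysis permits, as noted in Footnote~\ref{foot:power}), the theorem guarantees that after $r$ iterations the output $A$ satisfies $d(A,Y) \le d(\bA,Y) + \epsilon$ for every $\bA$ obeying
\[ \|\bA\|_\tr^2 \le \frac{\epsilon\,(r+1)(1-\tau)^2}{2\beta} ~. \]

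The final step is to read this condition in reverse, solving for the rank. Substituting $\beta = 1/n^2$ and $(1-\tau)^2 = 1/4$ and rearranging, the condition holds as soon as
\[ r + 1 \ge \frac{2\beta\,\|\bA\|_\tr^2}{\epsilon\,(1-\tau)^2} = \frac{8\,\|\bA\|_\tr^2}{n^2\,\epsilon} ~. \]
Since each iteration of GECO raises the rank by at most one, running the algorithm for this many iterations produces a matrix $A$ of rank $O(\|\bA\|_\tr^2/(n^2\epsilon))$, which is the stated bound.

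There is no genuine obstacle here: the corollary is a direct specialization of \thmref{thm:main} to the concrete smoothness constant supplied by \lemref{lem:smoothRPCA}. The only point deserving care is the bookkeeping in the last rearrangement, together with the observation that $\tau$ may be held at a constant rather than forced to shrink with $r$ — this is what lets the $(1-\tau)^{-2}$ factor be absorbed into the $O(\cdot)$ without affecting the order of the rank bound.
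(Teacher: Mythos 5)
Your proposal is correct and matches the paper's (implicit) argument exactly: the paper derives this corollary by plugging the smoothness bound $\beta \le 1/n^2$ from \lemref{lem:smoothRPCA} into \thmref{thm:main} and rearranging the trace-norm condition into a rank bound, which is precisely what you do. Your bookkeeping (with $\tau=1/2$ giving the constant $8$, absorbed into the $O(\cdot)$) is also right, so there is nothing to add.
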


\begin{figure*}
\begin{center}
\includegraphics[width=0.65\columnwidth]{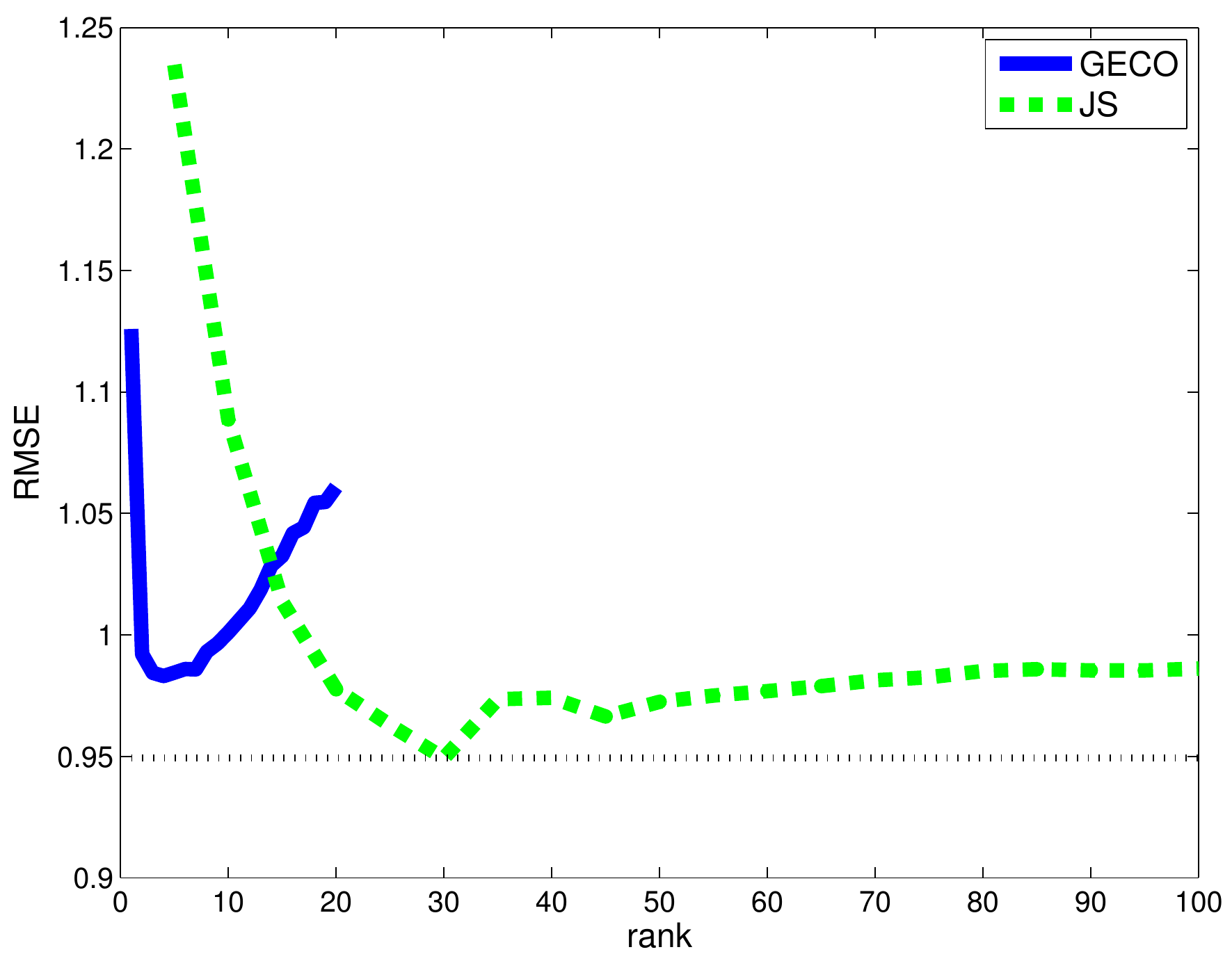}
\includegraphics[width=0.65\columnwidth]{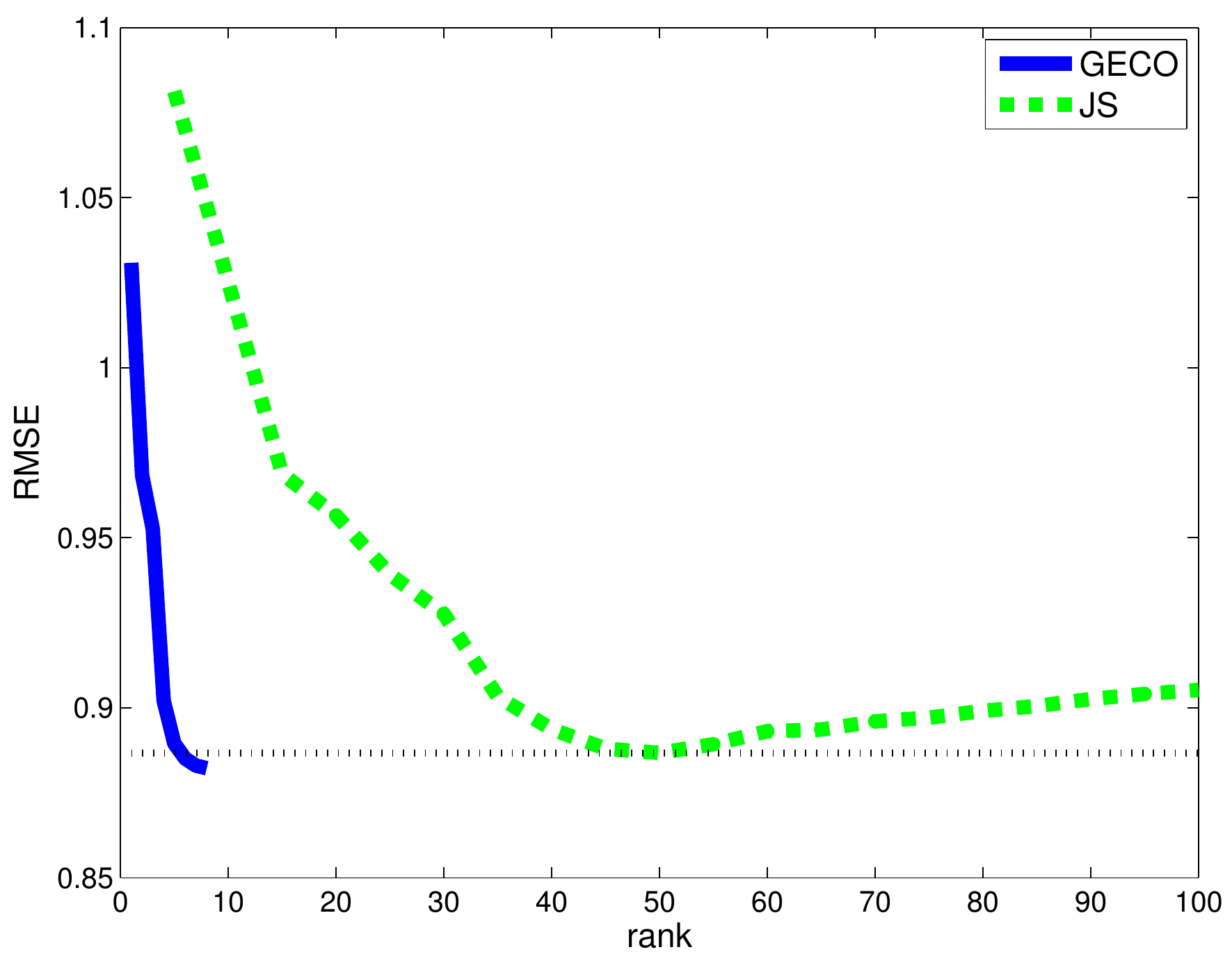}
\includegraphics[width=0.65\columnwidth]{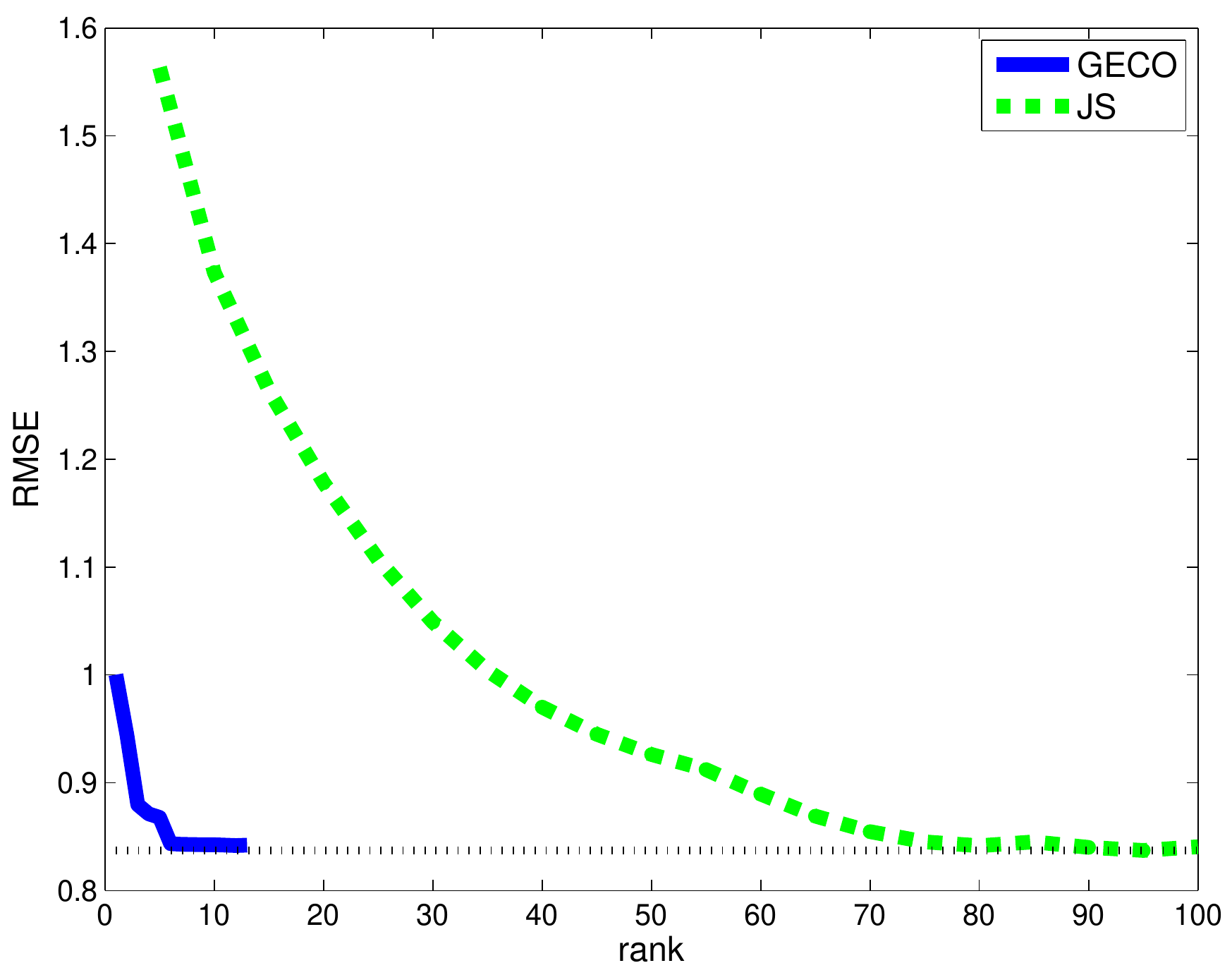}
\end{center}
\caption{Root Mean Squared Error on the test set as a function of the rank.  The horizontal line corresponds to the minimal error achieved by JS.  Left: MovieLens100k, Middle: MovieLens1M, Right: MovieLens10M.}
\label{fig:results}
\end{figure*}

\section{Experiments} \label{sec:experiments}

We evaluated GECO for the problem of matrix completion by conducting experiments on three standard collaborative filtering datasets: MovieLens100K, MovieLens1M, and MovieLens10M\footnote{Available through \url{www.grouplens.org}}. The different datasets contain $10^5,10^6,10^7$ ratings of $943,6040,69878$ users on $1682, 3706, 10677$ movies, respectively. All the ranking are integers in $1-5$. We partitioned each data set into training and testing sets as done in \cite{JaggiSu10}.

We implemented GECO while applying two of the variants described in \secref{sec:variants} as we explain in details below. The first variant (see \secref{chooseDirection}) tries to find update vectors $(u',v')$ which leads to a larger decrease of the objective function relatively to the leading singular vectors $(u,v)$ of the gradient matrix $\nabla R(A)$.  Inspired by the proof of \thmref{thm:main}, we observe that the decrease of the objective function inversely depends on the smoothness of the scalar function $R(A + \eta u v^t)$. We therefore would like to find a pair which on one hand has a large correlation with $\nabla R(A)$ and on the other hand yields a smooth scalar function $R(A + \eta u v^t)$.  The smoothness of $R(A + \eta u v^t)$ is analyzed in \lemref{lem:smooth} and is shown to be at most $\frac{2}{|E|}$. Examining the proof lines more carefully, we see that for balanced vectors, i.e. $u_i = \pm \frac{1}{\sqrt{m}}, v_j = \pm \frac{1}{\sqrt{n}}$, we obtain a lower smoothness parameter of $\frac{2}{mn}$. Thus, a possible good update direction is to choose $u,v$ that maximizes $u^T \nabla R(A) v$ over vectors of the form $u_i = \pm \frac{1}{\sqrt{m}}, v_j = \pm \frac{1}{\sqrt{n}}$. This is equivalent to maximizing $u^T \nabla R(A) v$ over the $\ell_\infty$ balls of $\reals^m$ and $\reals^n$, which is unfortunately known to be NP-hard. Nevertheless, a simple alternate maximization approach is easy to implement and often works well. That is, fixing some $u$, we can see that $v = \mathrm{sign}(u^T \nabla(A))/\sqrt{n}$ maximizes the objective, and similarly, fixing $v$ we have that $u = \mathrm{sign}(\nabla R(A) v)/\sqrt{m}$ is optimal. We therefore implement this alternate maximization at each step and find a candidate pair $(u',v')$. As described in section \secref{chooseDirection}, we compare the decrease of loss as obtained by the leading singular vectors, $(u,v)$, and the candidate pair mentioned previously, $(u',v')$, and update using the pair which leads to a larger decrease of the objective. We remind the reader that  although $(u',v')$ are obtained heuristically, our implementation is still provably correct and our guarantees from \secref{sec:genAnalysis} still hold.

In addition we performed the additional replacement steps as described in \secref{sec:replacements}. For that purpose, let $q$ be the number of times we try to perform additional replacement steps for each rank. Each  replacement attempt is done using the alternate maximization procedure described previously. After utilizing $q$ attempts of additional replacement steps, we force an increase of the rank. In our experiments, we set $q=20$. 
Finally, we implemented the ApproxSV procedure using $30$ iterations of the power iteration method. 


We compared GECO to a state-of-the-art method, recently proposed in \cite{JaggiSu10}, which we denote as the JS algorithm.  JS, similarly to GECO, iteratively increases the rank by computing a direction that maximizes some objective function and performing a step in that direction. See more details in \secref{sec:related}. In \figref{fig:results}, we plot the root mean squared error (RMSE) on the test set as a function of the rank. As can be seen, GECO decreases the error much faster than the JS algorithm. This is expected --- see again the discussion in \secref{sec:related}. We observe that GECO achieves slightly larger test error on the small data set, slightly smaller test error on the medium data set, and the same error on the large data set. On the small data set, GECO starts to overfit when the rank increases beyond $4$. The JS algorithm avoids this overfitting by constraining the trace-norm, but also starts overfitting after around $30$ iterations. On the other hand, on the medium data, the trace-norm constraint employed by the JS algorithm yields a higher estimation error, and GECO, which does not constrain the trace-norm, achieves a smaller error.  In any case, GECO achieves very good results while using a rank of at most $10$.

\section{Discussion}

GECO is an efficient greedy approach for minimizing a convex function subject to a rank constraint. 
One of the main advantages of GECO is that each of its iterations involves running few (precisely, $O(\log(n))$) iterations of the power method, and therefore GECO scales to large matrices. In future work we intend to apply GECO to additional applications such as multiclass classification and learning fast quadratic classifiers. 

\subsection*{Acknowledgements}
This work emerged from fruitful discussions with Tomer Baba, Barak Cohen, Harel Livyatan, and Oded Schwarz. The work is supported by the Israeli Science Foundation grant number 598-10.

{\small
\bibliography{curRefs} 
\bibliographystyle{icml2011}
}

%
%

\ShortPaper{

\newpage
\appendix

\section{Proofs}

\subsection{Proof of \thmref{thm:main}}

To prove the theorem we need the following key lemma, which generalizes a result given in \cite{ShalevSrZh10}.

\begin{lemma} \label{lma:key}
Assume that $f$ is $\beta$-smooth.
Let $I,\bar{I}$ be two subsets of $\U \times \V$. Let $\lambda$ be a minimizer of $f(\lambda)$ over all vectors with support in $I$ and let $\bar{\lambda}$ be a vector supported on $\bar{I}$. Assume that  $f(\lambda) > f(\bar{\lambda})$, denote $s=\|\blambda\|_1$, and let 
$\tau \in [0,1)$.
Let $(u,v) = \mathrm{ApproxSV}(\nabla R(A(\lambda)),\epsilon)$.
Then, there exists $\eta$ such that
\[
f(\lambda) - f(\lambda + \eta \e^{u,v})  \ge \frac{(f(\lambda)-f(\blambda))^2(1-\tau)^2}{2\beta s^2} ~.
\]
\end{lemma}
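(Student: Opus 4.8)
The plan is to combine three ingredients: the $\beta$-smoothness of $f$ (which turns a good coordinate into guaranteed progress), the multiplicative guarantee of $\mathrm{ApproxSV}$, and convexity together with the first-order optimality of $\lambda$ on its support. Throughout I would write $M = \nabla R(A(\lambda))$ and recall that the partial derivative along a coordinate $(p,q)$ equals $p^T M q$, so that $\max_{p,q} p^T M q$ is exactly the largest singular value of $M$.

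First I would establish the per-step progress. Plugging the update $\lambda + \eta \e^{u,v}$ into the smoothness inequality and rearranging gives
\[
f(\lambda) - f(\lambda+\eta\e^{u,v}) \ge -\eta\,(u^T M v) - \frac{\beta \eta^2}{2},
\]
and optimizing the right-hand side over $\eta$ (namely $\eta = -(u^T M v)/\beta$) yields a decrease of at least $(u^T M v)^2/(2\beta)$. By the definition of $\mathrm{ApproxSV}$ we have $u^T M v \ge (1-\tau)\max_{p,q} p^T M q \ge 0$, so this decrease is at least $(1-\tau)^2\,(\max_{p,q} p^T M q)^2/(2\beta)$.

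The crux is then to lower bound $\max_{p,q} p^T M q$ by $(f(\lambda)-f(\bar\lambda))/s$. Here I would invoke convexity, $f(\bar\lambda) \ge f(\lambda) + \inner{\nabla f(\lambda), \bar\lambda - \lambda}$, to get $f(\lambda) - f(\bar\lambda) \le \inner{\nabla f(\lambda), \lambda} - \inner{\nabla f(\lambda), \bar\lambda}$. Since $\lambda$ minimizes $f$ over all vectors supported on $I$, first-order optimality forces $\partial f(\lambda)/\partial \lambda_{u,v} = 0$ for every $(u,v)\in I$; because $\supp(\lambda)\subseteq I$, this makes $\inner{\nabla f(\lambda),\lambda}=0$. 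What remains is $-\inner{\nabla f(\lambda),\bar\lambda} = -\sum_{(p,q)} (p^T M q)\,\bar\lambda_{p,q}$, which by a H\"older-type bound is at most $\|\bar\lambda\|_1 \max_{p,q}|p^T M q| = s\,\max_{p,q} p^T M q$, using that each $|p^T M q|$ with $p,q$ on the unit spheres is bounded by the top singular value of $M$. This yields $\max_{p,q} p^T M q \ge (f(\lambda)-f(\bar\lambda))/s$.

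Substituting this lower bound into the progress estimate gives exactly
\[
f(\lambda)-f(\lambda+\eta\e^{u,v}) \ge \frac{(1-\tau)^2\,(f(\lambda)-f(\bar\lambda))^2}{2\beta s^2},
\]
as claimed. I expect the main obstacle to be the third paragraph: one must argue carefully that optimality of $\lambda$ over the subspace indexed by $I$ annihilates the $\inner{\nabla f(\lambda),\lambda}$ term, and then correctly apply the $\ell_1$/spectral-norm duality, identifying $\max_{p,q} p^T M q$ with the leading singular value of $M$. The smoothness and $\mathrm{ApproxSV}$ steps are routine by comparison.
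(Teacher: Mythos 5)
Your proof is correct and follows essentially the same route as the paper's: smoothness yields a decrease of $(u^T M v)^2/(2\beta)$ at the optimal step size, the multiplicative $\mathrm{ApproxSV}$ guarantee converts this into $(1-\tau)^2 \bigl(\max_{p,q} p^T M q\bigr)^2/(2\beta)$, and convexity combined with first-order optimality of $\lambda$ on $I$ and the $\ell_1$/$\ell_\infty$ H\"older bound gives $\max_{p,q} p^T M q \ge (f(\lambda)-f(\bar\lambda))/s$. The only difference is cosmetic: the paper carries the step size $\eta$ through the whole chain and optimizes it at the end, using WLOG sign normalizations of $\bar\lambda$ and $u$, whereas you optimize $\eta$ up front and handle signs via absolute values --- same ingredients, slightly cleaner packaging.
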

\begin{proof} 

Without loss of generality assume that $\bar{\lambda} \ge 0$ (if $\bar{\lambda}_{p,q} < 0$ for some $(p,q)$ we can set $\bar{\lambda}_{-p,q}=-\bar{\lambda}_{p,q}$ and $\bar{\lambda}_{p,q}=0$ without effecting the objective) and assume that $u^t \nabla(R(A(\lambda))) v \le 0$ (if this does not hold, let $u=-u$). For any $(p,q)$, let $\nabla_{p,q} = p^t \nabla R(A(\lambda)) q$ be the partial derivative of $f$ w.r.t. coordinate $(p,q)$ at $\lambda$ and denote
\[
Q_{p,q}(\eta) = f(\lambda) + \eta \, \nabla_{p,q}  + \frac{\beta\,\eta^2}{2}.
\]
 Note that the definition of $(u,v)$ and our assumption above implies  that
\[
-\nabla_{u,v}  = |\nabla_{u,v}| \ge (1-\tau) \max_{p,q} |\nabla_{p,q}| ~,
\]
which gives
\[
\nabla_{u,v} \le (\tau-1) \max_{p,q} |\nabla_{p,q}| = (1-\tau) \min_{p,q} \nabla_{p,q} ~.
\]
 Therefore, for all $\eta \ge 0$ we have
\[
 Q_{u,v}(\eta)  \le f(\lambda) +  (1-\tau) \eta \min_{p,q} \nabla_{p,q} + \frac{\beta \eta^2}{2} ~.
\]
In addition, the smoothness  assumption tells us that for all $\eta$ we have $f(\lambda + \eta \e^{u,v}) \le Q_{u,v}(\eta)$. Thus,  for any $\eta \ge 0$ we have
\begin{align*}
\min_{a} f(\lambda + a \e^{u,v}) &\le f(\lambda + \eta \e^{u,v}) \le Q_{u,v}(\eta)
\end{align*}
Combining the above we get
\[
\min_{a} f(\lambda + a \e^{u,v}) \le f(\lambda) +  (1-\tau) \eta \min_{(p,q) \in \bar{I}\setminus I} \nabla_{p,q} + \frac{\beta \eta^2}{2} ~.
\]
Multiplying both sides by $s$ and noting that
\begin{align*}
s \min_{(p,q) \in \bar{I}\setminus I} \nabla_{p,q} &\le \sum_{(p,q) \in \bar{I} \setminus I} \bar{\lambda}_{p,q} \nabla_{p,q}
\end{align*}
we get that
\begin{align*}
s& \min_{a} f(\lambda + a \e^{u,v})  \\
  &\le s f(\lambda) + (1-\tau) \eta \sum_{(p,q) \in \bar{I} \setminus I} \bar{\lambda}_{p,q} \nabla_{p,q} + s  \frac{\beta\,\eta^2}{2} ~.
\end{align*}
Since $\lambda$ is a minimizer of $f$ over $I$ we have that $ \nabla_{p,q}  = 0$ for $(p,q) \in I$. Combining this with the fact that $\lambda$ is supported on $I$ and $\blambda$ is supported on $\bI$ we obtain that
\[
 \sum_{(p,q) \in \bar{I} \setminus I} \bar{\lambda}_{p,q} \nabla_{p,q} = \inner{\bar{\lambda},\nabla f( \lambda)} = \inner{\bar{\lambda}-\lambda,\nabla f( \lambda)}~.
\]
From the convexity of $f$ we know that $\inner{\bar{\lambda}-\lambda,\nabla f( \lambda)} \le f(\bar{\lambda}) - f(\lambda)$. Combining all the above we obtain
\[
 s \min_{a} f(\lambda + a \e^{u,v}) \le
 s f(\lambda) + (1-\tau) \eta (f(\bar{\lambda})-f(\lambda))  + s  \frac{\beta\,\eta^2}{2} ~.
\]
This holds for all $\eta \ge 0$ and in particular for $\eta = (f(\lambda)-f(\blambda))(1-\tau)/(s\beta)$ (which is positive). Thus,
\[
s \min_{a} f(\lambda + a \e^{u,v}) \le  s f(\lambda) - \frac{(f(\lambda)-f(\blambda))^2(1-\tau)^2}{2\beta s} ~.
\]
Rearranging the above concludes our proof.
\end{proof}

Equipped with the above lemma we are ready to prove \thmref{thm:main}.

Fix some $\bA$ and let $\blambda$ be the vector of its singular values. Thus, $\|\blambda\|_1 = \|\bA\|_{\tr}$ and $f(\blambda) = R(\bA)$. For each iteration $i$, denote $\epsilon_i = f(\lambda^{(i)}) - f(\bar{\lambda})$, where $\lambda^{(i)}$ is the value of $\lambda$ at the beginning of iteration $i$ of GECO, before we increase the rank to be $i$. Note that all the operations we perform in GECO or one if its variants guarantee that the loss is monotonically non-increasing. Therefore, if $\epsilon_i \le \epsilon $ we are done. In addition, whenever we increase the rank by $1$, the definition of the update implies that $f(\lambda^{(i+1)}) \leq \min_{\eta} \, f(\lambda^{(i)} + \eta \e^{u, v})$, where $(u,v) = \mathrm{ApproxSV}(R(A(\lambda^{(i)})),\tau)$.
\lemref{lma:key} implies that 
\begin{equation} \label{eqn:fully-onestep-strong}
\begin{split}
\epsilon_i - \epsilon_{i+1} &= f(\lambda^{(i)}) - f(\lambda^{(i+1)})
 \geq~
\frac{\epsilon_i^2 (1-\tau)^2}{2\,\beta\,
\|\bA\|_{\tr}^2}  ~.
\end{split}
\end{equation}
Using Lemma B.2 from \cite{ShalevSrZh10}, the above implies that
for $i \ge 2\,\beta\,\|\bar{A}\|_{\tr}^2/(\epsilon(1-\tau)^2)$ we have that $\epsilon_i \le \epsilon$. We obtain that if $\|\bA\|^2_\tr \le \epsilon\,(r+1) (1-\tau)^2/ (2\beta)$ then $\epsilon_{r+1} \le \epsilon$, which concludes the proof of \thmref{thm:main}. \hfill\BlackBox

\subsection{Proof of \thmref{thm:sec}}
Let $\blambda$ be the vector obtained from the SVD of $\bA$, that is, $\bA = A(\blambda)$ and $\|\blambda\|_0 = \rank(\bA)$. Note that $f$ is $\sigma$-strongly-convex over the support of $\blambda$. Using Lemma 2.2 of \cite{ShalevSrZh10} we know that
$
\|\blambda\|_1^2 \le \frac{2 \|\blambda\|_0 \,f(0)}{\sigma}
$.
But, since $\|\bA\|_\tr = \|\blambda\|_1$, $\rank(\bA) = \|\blambda\|_0$, and $f(0) = R(0)$, we get
\[
\|\bA\|_\tr^2 \le \frac{2 \rank(\bA) \,R(0)}{\sigma} ~.
\]
The proof follows from the above using \thmref{thm:main}.
\hfill\BlackBox

}{}

\end{document}